\newtheorem{theorem}{Theorem}
\newtheorem{lemma}[theorem]{Lemma}
\newcommand{\BibTeX}{B\kern-.05em{\sc i\kern-.025em b}\kern-.08em\TeX}
\begin{document}


\begin{frontmatter}


\paperid{123} 


\title{Origin Tracer: A Method for Detecting LoRA Fine-Tuning Origins in LLMs}


\author[A]{\fnms{Hongyu}~\snm{Liang}\orcid{0009-0008-3650-6587}\thanks{Corresponding Author. Email: sctz9029@sjtu.edu.cn}}
\author[A]{\fnms{Yuting}~\snm{Zheng}}
\author[B]{\fnms{Yihan}~\snm{Li}} 
\author[A]{\fnms{Yiran}~\snm{Zhang}} 
\address[A]{Shanghai Jiao Tong University}
\address[B]{National University of Defense Technology}


\begin{abstract}
As large language models (LLMs) continue to advance, their deployment often involves fine-tuning to enhance performance on specific downstream tasks. However, this customization is sometimes accompanied by misleading claims about the origins, raising significant concerns about transparency and trust within the open-source community. Existing model verification techniques typically assess functional, representational, and weight similarities. However, these approaches often struggle against obfuscation techniques, such as permutations and scaling transformations. To address this limitation, we propose a novel detection method Origin-Tracer that rigorously determines whether a model has been fine-tuned from a specified base model. This method includes the ability to extract the LoRA rank utilized during the fine-tuning process, providing a more robust verification framework. This framework is the first to provide a formalized approach specifically aimed at pinpointing the sources of model fine-tuning. We empirically validated our method on thirty-one diverse open-source models under conditions that simulate real-world obfuscation scenarios. We empirically analyze the effectiveness of our framework and finally, discuss its limitations. The results demonstrate the effectiveness of our approach and indicate its potential to establish new benchmarks for model verification. 
\end{abstract}

\end{frontmatter}


\section{Introduction}

Recently, as large language models (LLMs) continue to advance, increasingly powerful models are rapidly emerging, demonstrating exceptional performance across a wide range of tasks. Users frequently fine-tune these models to enhance their performance for specific applications. However, certain model providers have engaged in deceptive practices, exaggerating their technological capabilities for unjust gain. For example, the \href{https://huggingface.co/mattshumer/Reflection-Llama-3.1-70B}{Reflection-70B}, marketed by HyperWrite as the world’s leading open-source model, was in fact fine-tuned on Llama3-70B-instruct, not on Llama3.1-70B as originally claimed, as illustrated in Figure~\ref{fig:enter-label}. Such false claims have raised significant concerns regarding the potential misuse of models and the spread of misleading information~\citep{pan2023risk}.

Recent methods for model origin detection predominantly focus on functional behavior, representational similarity, weight similarity, training data properties, and program-level analysis~\citep{klabunde2023towards}. However, these approaches often lack rigorous formal definitions and standardized evaluation criteria, resulting in ambiguity and inconsistency when determining whether a given model is a fine-tuned derivative of a specific base model. Among these techniques, weight similarity is generally regarded as one of the most indicative metrics for identifying model lineage. Nevertheless, its effectiveness can be significantly undermined by obfuscation techniques such as parameter permutation and scaling transformations~\citep{zhou2023permutation,lee2018defending}. This vulnerability highlights the urgent need for more robust and principled detection frameworks that can reliably trace fine-tuning relationships even under adversarial obfuscation.

To address this challenge, our study introduces Origin-Tracer that can rigorously determine whether a model has been fine-tuned from a specified base model. Our approach is designed to accurately and precisely address the complexity of model fine-tuning for detection, marking a significant advance over existing techniques. Crucially, the method remains valid regardless of the permutations used, enabling accurate determination of the basis model for any derivative. Through this research, we aim to establish new standards for model verification in the open-source community and improve the transparency and trustworthiness of the sources of AI models.

To empirically validate the efficacy of our detection method, we conducted tests on a diverse set of thirty-one open-source models. Recognizing the presence of rotational transformations, we treated the model parameters as inherently unknowable, approaching each model as a gray box where only the inputs and outputs of each layer are accessible. This perspective ensures that our testing conditions reflect practical limitations typically encountered in real-world scenarios. Under these constraints, our results demonstrate that our algorithm robustly identifies fine-tuning across all tested models, confirming its broad applicability and effectiveness.

\begin{figure*}
    \centering
    \includegraphics[width=\linewidth]{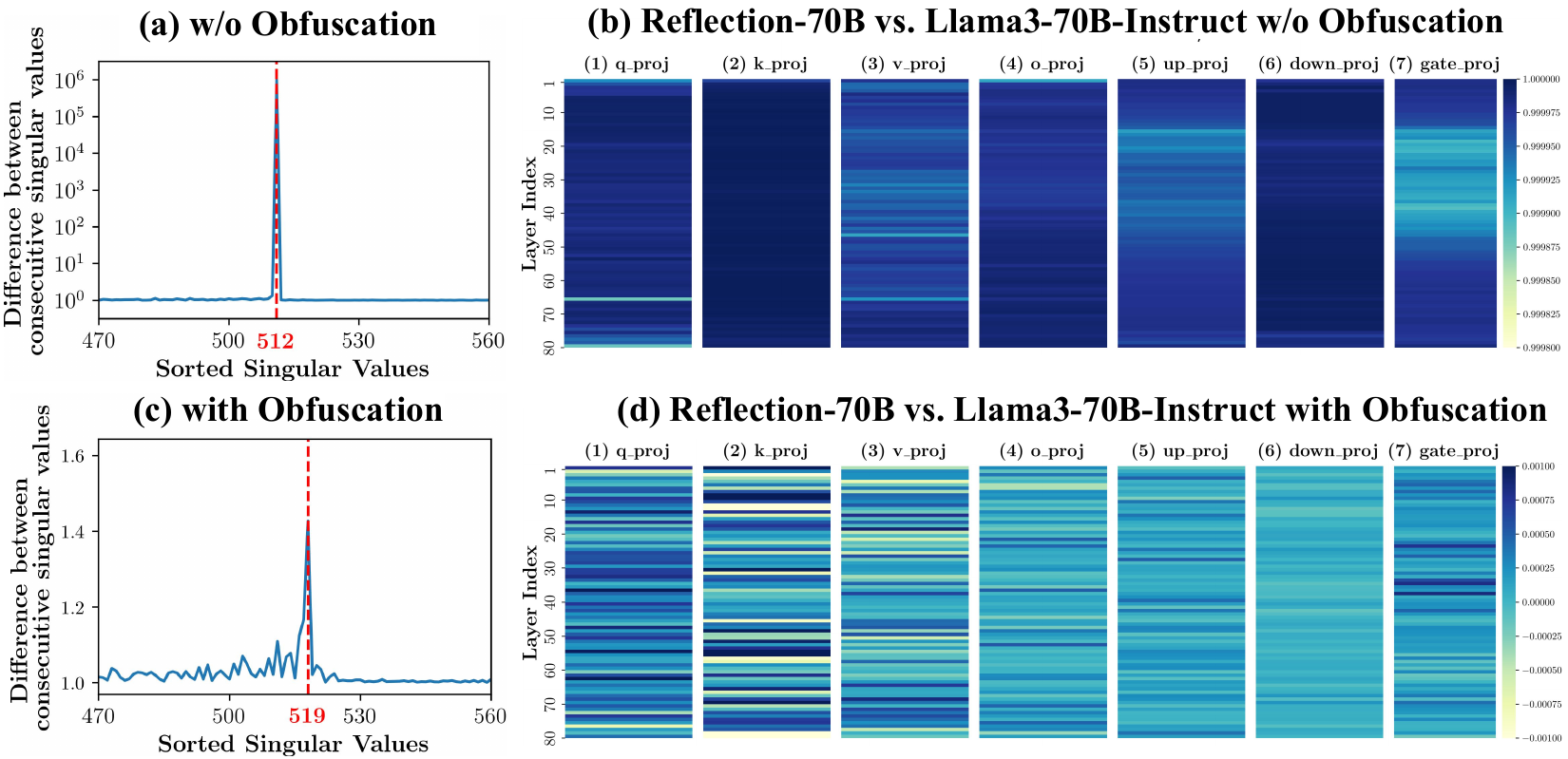}
    \caption{The detection of Reflection-70B with (w/o) obfuscation. Comparison of Origin-Tracer and Parameter Similarity Performance: Without Obfuscation (a, b) vs. With Obfuscation (c, d). Our method demonstrates resilience to obfuscation, while parameter similarity is more susceptible to its effects.}
    \vspace{0.4cm}
    \label{fig:enter-label}
\end{figure*}


\section{Related Works}

\textbf{Parameter-Efficient Fine-Tuning.} PEFT has emerged as a crucial strategy for optimizing LLMs for specific tasks while reducing resource consumption. Techniques such as Low-Rank Adaptation (LoRA)~\citep{lora,qlora}, Adapter Layers~\citep{adapter_layers}, and Prompt Tuning~\citep{prompt-tuning} achieve performance improvements by modifying only a small subset of parameters, thus capturing task-specific information while retaining the original model's foundational knowledge. However, the increasing reliance on these methods raises concerns about transparency and traceability, highlighting the need for robust verification techniques to ensure the integrity and reliability of fine-tuned models.

\textbf{Obfuscation Techniques.} To bolster model privacy and hinder unauthorized access, techniques such as permutation and scaling are employed~\cite {Elhage2021}. These methods obscure direct parameter comparisons, complicating the identification of derived models. For example, permutation rearranges parameters, scaling alters their magnitudes, and noise addition introduces random variations, all of which mask the model's characteristics. These obfuscation strategies protect intellectual property and sensitive data from unauthorized access and reverse engineering \citep{Yousefi2023}, while also preventing misuse.

\textbf{Detection Methods.} Recent researches for identifying model modifications focus on various similarities, including functional, representational, weight, training data, and procedural aspects~\citep{Klabunde2023similarity}. Functional and representational similarities compare model outputs and internal activations, respectively, but often struggle against fine-tuning variations and obfuscation techniques like permutations and noise addition~\citep{Ethayarajh2019, Wu2020similarity, Kornblith2019}. Weight similarity can effectively detect model lineage but is compromised by permutation-based obfuscation~\citep{Wang2022weight, Elhage2021}. Techniques examining training data and procedural similarities, such as influence functions, can illuminate fine-tuning practices but often require extensive datasets~\citep{Grosse2023, Shah2023}. Additionally, procedural similarity offers insights into training methods but is limited by the proprietary nature of training pipelines~\citep{Biderman2023, Zhao2023}. Overall, recent approaches highlight the challenges in detecting model modifications amid sophisticated obfuscation tactics.


\section{Preliminaries}
\label{sec:preliminaries}
\subsection{Decoder-only Transformer Architecture}

A decoder-only transformer architecture is composed of multiple decoder layers, where each layer processes an embedding matrix \( X \in \mathbb{R}^{n \times d} \) and outputs an embedding matrix of identical dimensions. Formally, let \( f: \mathbb{R}^{n \times d} \rightarrow \mathbb{R}^{n \times d} \) denote the structure of each decoder layer. Each decoder layer consists of two principal components: (1) a self-attention mechanism \( \varphi(\cdot) \), which enables the model to capture dependencies within the input sequence, and (2) a multi-layer perceptron \( \text{MLP}(\cdot) \), which facilitates nonlinear transformations and feature extraction. Given an input embedding matrix $X$, the output of a decoder layer is defined as 
$f(X) = \text{MLP}\circ\varphi(X). $
\begin{wrapfigure}{r}{0.18\textwidth}
  \vspace{-0.cm}
  \centering
  \includegraphics[width=1\linewidth]{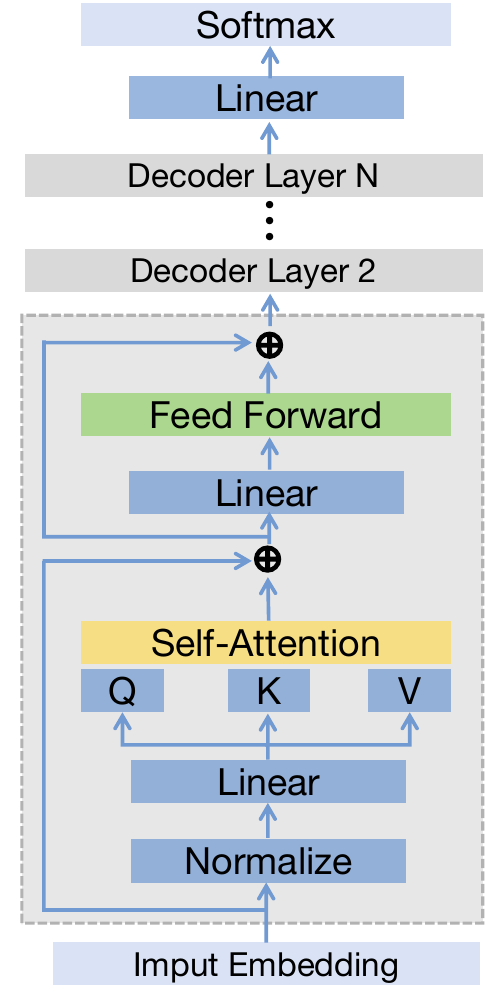}

  \caption{\small Decoder-only Architecture.}
  \label{fig:layer-architecture}
\end{wrapfigure}

The \textbf{self-attention module} \( \varphi \) comprises two main components: an input layer normalization function \( h: \mathbb{R}^{n \times d} \rightarrow \mathbb{R}^{n \times d} \), followed by a self-attention mechanism. Let \( x_i \) represent the \( i \)-th row of the embedding matrix \( X \), and let \( h_i \) denote the \( i \)-th row of the normalized output from \( h \). The relationship between \( x_i \) and \( h_i \) is given by:
$$h_i(X;\gamma) = \frac{x_i \odot \gamma}{\sqrt{\|x_i\|_2^2 + \varepsilon}},$$
where $\gamma$ is a norm weight vector. 
In the self-attention mechanism, let \( W_q \in \mathbb{R}^{d \times d_q} \), \( W_k \in \mathbb{R}^{d \times d_k} \), \( W_v \in \mathbb{R}^{d \times d_v} \), and \( W_o \in \mathbb{R}^{d_o \times d} \) denote the query, key, value, and output weight matrices, respectively ($d_q = d_k, d_v = d_o$). Additionally, let \( R_\theta \in \mathbb{R}^{d \times d} \) represent the rotary position embedding matrix (RoPE). Given the input matrix \( X \), the output of the self-attention module \( \varphi(X) \) is defined as:
$$\varphi(X) = \text{softmax}(\frac{QK^\top}{\sqrt{d_k}}) h(X;\gamma) W_v W_o + X,$$
where the key and query matrices are defined as:
$$Q = h(X;\gamma) R_\theta W_q, \quad K = h(X;\gamma) R_\theta W_k.$$
The \textbf{multi-layer perceptron}  \(\text{MLP}: \mathbb{R}^{n \times d} \rightarrow \mathbb{R}^{n \times d}\) consists of a layer normalization and a perceptron module. Here, the layer normalization \(h(\cdot; \gamma')\) mirrors the structure used in the self-attention module, though with a distinct parameter matrix \(\gamma'\). Define the weight matrices \(W_G \in \mathbb{R}^{d \times p}\), \(W_{\text{up}} \in \mathbb{R}^{d \times p}\), and \(W_{\text{down}} \in \mathbb{R}^{p \times d}\) for the gate, up, and down transformations in the perceptron, respectively. Let \(\sigma\) represent an activation function, such as GeLU or SiLU. Given an input matrix \(X\), the output of the MLP module is computed as
$$
\text{MLP}(X) = (\sigma(G) \odot U) W_{\text{down}} + X,
$$
where the gate \(G\) and up \(U\) matrices are defined as
$$
G = h(X; \gamma') W_G, \quad U = h(X; \gamma') W_{\text{up}}
$$
and $\odot$ denotes element-wise matrix product.

\subsection{Obfuscation}
Obfuscation in neural networks refers to the deliberate transformation of model parameters or architectural components to conceal their original structure while preserving functional equivalence. In practice, obfuscation techniques—such as reordering parameter matrices within attention and multilayer perceptron (MLP) modules—are employed to complicate unauthorized access, inhibit direct model comparisons, and enhance intellectual property protection. Despite altering the internal representation, these methods maintain the model’s functional output, thereby preserving performance while obscuring internal details.

Significant studies, such as those by \cite{maron2020universality} and \cite{zaheer2017deep}, have explored obfuscation's effects in maintaining consistent outputs across different configurations. These works highlight how obfuscation stabilizes model performance and hinders reverse engineering efforts.

Mathematically, given a set \( S = \{s_1, s_2, \dots, s_n\} \), obfuscation is defined through transformations that render the underlying structure opaque. When applied to a weight matrix \( W \in \mathbb{R}^{m \times n} \), transformation matrices \(\Pi \in \{0, 1\}^{n \times n} \) reorder elements, turning \( W \) into \(\Pi( W) \). These transformations complicate direct analysis without affecting the model’s output.

In Transformer architectures, the MLP and attention layers, denoted by MLP and \( \varphi \), undergo obfuscation through transformations \( \Pi_1 \) and \( \Pi_2 \), defined as follows:
\[
\text{MLP}_{\text{obf}} \circ \varphi_{\text{obf}}(X) = \text{MLP} \circ \varphi(X),
\]
where \( \text{MLP}_{\text{obf}} = \Pi_1(\text{MLP}) \) and \( \varphi_{\text{obf}} = \Pi_2(\varphi) \). This approach ensures that internal obfuscation does not affect the overall output, maintaining the model's integrity and safeguarding its internal structure.


\subsection{Problem Formulation}
This research aims to determine whether the candidate model \( M_c \) has undergone fine-tuning in its self-attention modules, excluding MLP modules, from the base model \( M_b \) using Low-Rank Adaptation (LoRA), followed by layer-level obfuscation. We consider both models as white boxes, but the obfuscations in \( M_c \) complicate comparisons with \( M_b \) due to potential parameter transformations that may obscure the structural relationships between their parameter matrices.

Let $M_c^*$ represent the ideally fine-tuned model derived from $M_b$ using Low-Rank Adaptation (LoRA) without any obfuscation. The candidate model $M_c$ is then generated from $M_c^*$ by implementing obfuscations to its layers. The challenge posed by this scenario is encapsulated by the discrepancy in ranks of the parameter differences, expressed as:
$$\text{rank}(W_c^* - W_b) = s ~,~ \text{rank}(W_c - W_b) \gg s ,$$
$$\text{MLP}_c^* = \text{MLP}_b~,~ \text{MLP}_c = \Pi(\text{MLP}_c^*).$$
Here, \( W \) represents the matrices of fine-tuned module parameters, \( W_c^* \) denotes the ideally fine-tuned matrices without obfuscations, and \( W_b \) represents the parameter matrices of the base model.

The primary challenge this research addresses is the determination of the original, unpermuted parameter matrix $W_c^*$ given the observed permuted matrix $W_c$. Our primary objective is to develop methodologies by which the structure of $W_c^*$ can be accurately inferred from $W_c$ without prior knowledge of the specific obfuscations applied.
\section{Methodology}
\label{sec:methodology}

\begin{lemma}
\label{function f}
    For a given $x \in \mathbb{R}^{n \times d}$, MLP $:\mathbb{R}^{n \times d} \rightarrow \mathbb{R}^{n \times d}$ defined as
   $$\text{MLP}(X) = (\sigma(G) \odot U) W_{\text{down}} + X,$$
where the gate \(G\) and up \(U\) matrices are defined as
$$
G = h(X; \gamma') W_G, \quad U = h(X; \gamma') W_{\text{up}}
$$
$h(X;\gamma')$ is the normalization and $\odot$ denotes element-wise matrix product.
The MLP function is injective for non-parallel vectors.
\end{lemma}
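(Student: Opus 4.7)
The plan is to prove injectivity by contradiction, exploiting the residual structure of the MLP together with the scale behavior of the RMS normalization. Because $h(\cdot;\gamma')$, the weight matrices $W_G, W_{\text{up}}, W_{\text{down}}$, the activation $\sigma$, and the Hadamard product $\odot$ all act row-wise, I would first reduce the statement to the single-vector map $f(z) = (\sigma(h(z;\gamma')W_G)\odot h(z;\gamma')W_{\text{up}})\,W_{\text{down}} + z$ on $\mathbb{R}^d$, for which it suffices to show $f(x)\neq f(y)$ whenever $x, y\in\mathbb{R}^d$ are non-parallel.

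Suppose toward contradiction that $x\nparallel y$ but $f(x)=f(y)$. Rearranging via the skip connection yields
$$x - y \;=\; \bigl(\sigma(G_y)\odot U_y - \sigma(G_x)\odot U_x\bigr)\,W_{\text{down}},$$
so the purely linear residual difference $x-y$ must equal a particular nonlinear image through $W_{\text{down}}$. The right-hand side depends on $x, y$ only through $h(x;\gamma')$ and $h(y;\gamma')$, and, provided $\gamma'$ has no zero entries, the normalization preserves non-parallelism, i.e.\ $h(x;\gamma')\nparallel h(y;\gamma')$.

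Setting $g(z):=(\sigma(G_z)\odot U_z)\,W_{\text{down}}$, observe that $h(\cdot;\gamma')$ is (up to the small $\varepsilon$ perturbation in its denominator) invariant under positive rescaling, so $g$ is essentially constant along rays from the origin. Hence, for each direction $\hat z$, the trajectory $\lambda\mapsto f(\lambda\hat z)=\lambda\hat z + g(\lambda\hat z)$ is almost an affine line in $\mathbb{R}^d$ with direction $\hat z$. For non-parallel $x,y$ the two corresponding lines have distinct directions and thus can meet in at most one point; enforcing $f(x)=f(y)$ at that unique intersection would force $x=y$, contradicting non-parallelism.

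The main obstacle I anticipate is handling the $\varepsilon$ term, which spoils exact scale-invariance and makes the trajectories only approximately affine. Resolving this cleanly will likely require either a local monotonicity argument based on strict monotonicity of $\sigma$ (SiLU or GeLU) on the relevant portion of its domain, or a Jacobian/rank argument showing that $g(y)-g(x)$ cannot align with the direction $x-y$ except on a measure-zero set of parameter configurations.
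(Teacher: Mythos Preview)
Your affine-line argument has a genuine gap. You correctly observe that (for $\varepsilon=0$) the image of each ray $\{\lambda\hat z:\lambda>0\}$ under $f$ is the affine line $\lambda\mapsto\lambda\hat z+g(\hat z)$, and that two such lines with non-parallel directions meet in at most one point. But that conclusion points the wrong way: if the two image lines \emph{do} meet at some point, say at parameters $(\lambda_0,\mu_0)$, then $x=\lambda_0\hat x$ and $y=\mu_0\hat y$ are distinct (non-parallel) inputs with $f(x)=f(y)$, which is precisely a failure of injectivity. ``At most one intersection'' does not give ``no intersection''; your claim that this ``would force $x=y$'' is unjustified. Concretely, in $\mathbb{R}^2$ with $\varepsilon=0$, take $\hat x=(1,0)$, $\hat y=(0,1)$ and weights arranged so that $g(\hat x)=(0,1)$, $g(\hat y)=(1,0)$; then $f(\hat x)=f(\hat y)=(1,1)$. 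So the statement is simply false for arbitrary weights, and no purely geometric argument that ignores the specific parameters can succeed.

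The paper's proof takes the route you only mention as a last-resort patch for the $\varepsilon$ term: it is a measure-zero argument over the \emph{weight} space $(W_G,W_{\text{up}},W_{\text{down}})\in\mathbb{R}^{3dp}$, not over the input space. For each fixed pair of inputs the collision condition cuts out a lower-dimensional (hence Lebesgue-null) subset of weight configurations; the paper then assumes the token space is \emph{countable} so that the union over all input pairs is still null, concluding that the MLP is injective with probability~$1$ over random weights. If you want to repair your proposal, drop the line-intersection heuristic entirely and promote the parameter-genericity idea to the main argument; note also that you will need the countability assumption (or some substitute) to control the union over input pairs.
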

\begin{proof}
    Assume that \(\text{MLP}(X) = \text{MLP}(Y) \) and \( X_i \not\parallel Y_i \), which implies:
    \begin{equation*}
        \left(\sigma(G_X) \odot U_X\right)W_{\text{down}} + X = \left(\sigma(G_Y) \odot U_Y\right)W_{\text{down}} + Y.
    \end{equation*}
    We can easily get $h(X;\gamma')$ is bijective for non-parallel vectors, so that can simplify the equation and rearranging it, we have:
    \begin{equation*}
        \left[\sigma\left(XW_G\right) \odot \left(XW_{\text{up}}\right)\right]W_{\text{down}} - \left[\sigma\left(YW_G\right) \odot \left(YW_{\text{up}}\right)\right]W_{\text{down}} = Y - X.
    \end{equation*}
    Suppose that the token space $\mathcal{C}$ is countable, which means that $f$:$\mathcal{C}^n \rightarrow \mathcal{C}^n.$
    Let
    \begin{multline*}
        \mathcal{M}_{ij}= \{(W_G,W_{up},W_{down})|~\text{satisfy that}~\\
        [\sigma(C_iW_G)\odot(C_iW_{up})]W_{down}=[\sigma(C_jW_G)\odot(C_jW_{up})]W_{down}\},
    \end{multline*}
    where $C_i \in \mathcal{C} \quad \text{and} \quad C_j \in \mathcal{C},$
    then we can get that
    $$dim(\mathcal{M}_{ij})=3dp-nd < 3dp,$$
    which means that
    the Lebesgue measure of $\mathcal{M}_{ij}$
    $$\mu_L(\mathcal{M}_{ij}) = 0.$$
    Suppose that $$\mathcal{M}= \bigcup_{i,j}\mathcal{M}_{ij},$$
    this indicates that 
    $$\mu_L(\mathcal{M}) = 0.$$
    So we can show that 
    $$(W_G,W_{up},W_{down}) \not \in \mathcal{M}~~\text{with probability}~1.$$
    Hence, we can say function MLP is injective for non-parallel vectors.
\end{proof}
\begin{lemma}
\label{softmax}
    Let \( A \) and \( B \) be distinct matrices of the same dimension, i.e., \( A, B \in \mathbb{R}^{m \times n} \). Then, we have that
    $$\text{softmax}(A) = \text{softmax}(B)$$
    indicates that 
    $$(A - B) = 
    \begin{pmatrix}
    \alpha_1 \\
    \alpha_2\\
    \vdots \\
    \alpha_m
    \end{pmatrix} \mathbf{1}_n^\top,$$
    where \( \mathbf{1}_n \) is the column vector of ones of length \( n \) and \( \alpha_1, \ldots, \alpha_m \) are scalars. 
\end{lemma}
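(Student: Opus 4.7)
The plan is to reduce the matrix statement to a one-dimensional statement about vectors, since the softmax in this context is applied row-wise: for each row $a = (a_1, \dots, a_n)$ of $A$, the $j$-th entry of the softmax is $\exp(a_j)/\sum_{k=1}^n \exp(a_k)$, and the corresponding statement to prove becomes: if $\mathrm{softmax}(a) = \mathrm{softmax}(b)$ for $a, b \in \mathbb{R}^n$, then $a - b = \alpha \mathbf{1}_n$ for some scalar $\alpha$. Once this row-wise claim is established, applying it to each of the $m$ rows of $A$ and $B$ produces scalars $\alpha_1, \dots, \alpha_m$ yielding precisely the stated rank-one-type structure $A - B = (\alpha_1, \dots, \alpha_m)^\top \mathbf{1}_n^\top$.

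For the row-wise claim, the natural approach is to exploit the fact that the exponential is positive and that the normalizing denominators in the softmax are the same within a given row. First I would write the equality of softmaxes componentwise as
\[
\frac{\exp(a_j)}{\sum_{k} \exp(a_k)} = \frac{\exp(b_j)}{\sum_{k} \exp(b_k)} \quad \text{for every } j,
\]
and then take the ratio of the $j$-th and $\ell$-th components, which cancels the denominators and gives $\exp(a_j - a_\ell) = \exp(b_j - b_\ell)$. Applying the logarithm (justified by positivity) yields $a_j - a_\ell = b_j - b_\ell$, i.e.\ $a_j - b_j = a_\ell - b_\ell$ for all indices $j, \ell$. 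This constant common value is the desired scalar $\alpha$ for that row.

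To finish, I would explicitly collect these scalars: letting $\alpha_i := A_{i,1} - B_{i,1}$, the previous paragraph shows that $A_{i,j} - B_{i,j} = \alpha_i$ for every column index $j$, so the $i$-th row of $A - B$ equals $\alpha_i \mathbf{1}_n^\top$. Stacking the rows gives the stated column-vector-times-$\mathbf{1}_n^\top$ factorization.

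There is no real obstacle here; the argument is elementary once one notices that the softmax kills exactly the additive constants along each row. The only subtlety worth flagging is making sure the convention (row-wise softmax) matches the surrounding paper, and that the logarithm step is legitimate, which is immediate from strict positivity of the exponential. No auxiliary lemma from the excerpt is needed.
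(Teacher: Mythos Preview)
Your proposal is correct and follows essentially the same approach as the paper: from the componentwise equality of the row-wise softmax, both arguments deduce that $a_{ij}-b_{ij}=a_{ik}-b_{ik}$ for all $j,k$, and then assemble these constants into the column vector $(\alpha_1,\dots,\alpha_m)^\top$. Your version is in fact more explicit than the paper's, which jumps directly from the componentwise equality to the key identity without spelling out the ratio-and-logarithm step.
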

\begin{proof}
    Let matrices $A$  and $B$  be defined as follows:    
    For matrices \( A \) and \( B \), the condition
    $$\text{softmax}(A) = \text{softmax}(B) ,$$
    which means that
    $$
    \left(\frac{e^{a_{i1}}}{\sum_{j=1}^{n} e^{a_{ij}}}, \ldots, \frac{e^{a_{in}}}{\sum_{j=1}^{n} e^{a_{ij}}}\right)=
    \left(\frac{e^{b_{i1}}}{\sum_{j=1}^{n} e^{b_{ij}}}, \ldots, \frac{e^{b_{in}}}{\sum_{j=1}^{n} e^{b_{ij}}}\right).$$
    This indicates that
    $$a_{ij}-b_{ij}=a_{ik}-b_{ik}$$
    Hence, we have concluded that:
    $$(A - B) = \begin{pmatrix}
    \alpha_1 \\
    \alpha_2\\
    \vdots \\
    \alpha_m
    \end{pmatrix} \mathbf{1}_n^\top.$$
\end{proof}
\begin{theorem}
\label{uniquely}
Given identical inputs and outputs, the parameter matrix of a single decoder-layer's value and output modules are uniquely determined.
\end{theorem}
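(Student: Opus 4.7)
The plan is to reduce the claim to the injectivity of the map $(W_v, W_o) \mapsto W_v W_o$ considered as a composed linear transformation, and then strip away the surrounding nonlinearities using the two preceding lemmas. Concretely, I first suppose two parameter choices $(W_{v,1}, W_{o,1})$ and $(W_{v,2}, W_{o,2})$ (with the remaining parameters $W_q, W_k, \gamma, \gamma', R_\theta, W_G, W_{\text{up}}, W_{\text{down}}$ held fixed) realise the same decoder layer map $f$ on a common input $X$ with non-parallel rows. Since $\text{MLP}$ is injective on non-parallel vectors by Lemma~\ref{function f}, the equality $f_1(X) = f_2(X)$ lifts to $\varphi_1(X) = \varphi_2(X)$. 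Subtracting the residual connection $X$ and factoring out the shared attention map yields
$$A(X)\,\bigl[W_{v,1} W_{o,1} - W_{v,2} W_{o,2}\bigr] = 0, \qquad A(X) := \text{softmax}\!\Bigl(\tfrac{QK^\top}{\sqrt{d_k}}\Bigr)\, h(X;\gamma).$$

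Second, I would promote this to an equation that holds over a sufficiently rich family of inputs and then cancel $A(X)$. The key step is to exhibit $d$ inputs $X^{(1)}, \dots, X^{(d)}$ (built, for instance, from near-one-hot rows scaled to avoid parallelism) for which the concatenated matrix formed from rows of $A(X^{(k)})$ has rank $d$. To justify the existence of such inputs, I would reuse the measure-theoretic style used in the proof of Lemma~\ref{function f}: the set of parameter tuples $(W_q, W_k, \gamma)$ for which no such rank-$d$ selection exists is cut out by nontrivial algebraic relations and hence has Lebesgue measure zero, so generic parameters suffice. If at any point the softmax must also be inverted across different attention maps (e.g.\ if one allowed $Q, K$ to vary too), I would invoke Lemma~\ref{softmax} to recover that the pre-softmax logits can differ only by a row-constant vector, a degeneracy that will again be ruled out generically.

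Third, with $A$ effectively left-invertible on this enlarged system, the bracketed term must vanish, giving $W_{v,1} W_{o,1} = W_{v,2} W_{o,2}$, which is the sharpest uniqueness one can hope for: any invertible $S$ produces the reparametrisation $(W_v S, S^{-1} W_o)$ with identical product, so the theorem is to be read as uniqueness of the composed value-output transformation. I would state this caveat explicitly in the writeup.

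\paragraph{Main obstacle.}
The principal difficulty is the generic-rank claim in the second step. Lemma~\ref{function f} only requires negligibility of a bad set in parameter space, but here I need positive control on the column span of $A(X)$ as $X$ varies, and the softmax nonlinearity entangled with RMS-style normalization makes it nontrivial to exhibit explicit inputs that witness full rank. The cleanest route is probably to perturb around a single base input and use analyticity of $A$ in $X$, so that non-degeneracy at one point propagates to a dense set; verifying non-degeneracy at that single point is where the technical work concentrates.
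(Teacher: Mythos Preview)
Your overall strategy matches the paper's: use Lemma~\ref{function f} to peel off the MLP, reduce to $A(X)\bigl[W_{v,1}W_{o,1}-W_{v,2}W_{o,2}\bigr]=0$, and then argue that $A$ can be made left-invertible so the bracket vanishes. You also correctly identify that the conclusion is uniqueness of the \emph{product} $W_vW_o$ rather than of the factors separately, which is exactly what the paper proves.

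Where you diverge is in handling what you call the ``main obstacle,'' and here you are working much harder than necessary. The paper does not try to control the rank of $A(X)$ for multi-token inputs at all; instead it specialises to \emph{single-token} inputs $x\in\mathbb{R}^{1\times d}$. With $n=1$ the pre-softmax matrix $QK^\top$ is a $1\times 1$ scalar, so $\text{softmax}(QK^\top/\sqrt{d_k})=1$ identically, and hence $A(x)=h(x;\gamma)$ with no softmax in sight. Feeding in $d$ single-token inputs $x_1,\dots,x_d$ that are linearly independent then gives a $d\times d$ stack of rows $h(x_i;\gamma)$, and since RMS-style normalisation preserves rank on non-parallel vectors this stack is full rank, yielding $W_{v,1}W_{o,1}=W_{v,2}W_{o,2}$ immediately. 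No analyticity, no perturbation, no measure-zero argument on $(W_q,W_k,\gamma)$ is needed, and Lemma~\ref{softmax} is not actually invoked in this part of the argument. Your route via generic rank of $A(X)$ for $n>1$ could in principle be pushed through, but it is both harder and less explicit than the paper's single-token trick, and the obstacle you flag simply disappears once you take $n=1$.
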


\begin{proof}
Recall that the output of a single decoder-layer is the concatenation of a residual MLP  and a residual self-attention $\varphi$, where
$$f(X) = \text{MLP}\circ\varphi(X),$$
$$\varphi(X) = \text{softmax}(\frac{QK^\top}{\sqrt{d_k}}) h(X;\gamma) W_v W_o + X,$$
where the key and query matrices are defined as:
$$Q = h(X;\gamma) R_\theta W_q, \quad K = h(X;\gamma) R_\theta W_k.$$
By Lemma~\ref{function f}, we have shown that the MLP is injective for non-parallel vectors. {\color{red}Assume} that the input vectors cannot be paralleled. This indicates that for any matrix $Y\in \mathbb{R}^{n\times d}$, there exists a unique $Z\in\mathbb{R}^{n\times d}$ such that $\text{MLP}(Z)=Y$. Next, we are going to show that for a given $Z\in\mathbb{R}^{n\times d}$ and a given $X\in\mathbb{R}^{n\times d}$, there exists a unique set of matrix $(\tilde{W}_Q,\tilde{W}_K, \tilde{W}_V, \tilde{W}_O)$ satisfying rank$(\tilde{W}_Q-W_Q)=s \ll d$, rank$(\tilde{W}_K-W_K)=s \ll d$, rank$(\tilde{W}_V-W_V)=s \ll d$ and rank$(\tilde{W}_O-W_O)=s \ll d$ such that 
$$\varphi(X;\tilde{W}_Q,\tilde{W}_K, \tilde{W}_V, \tilde{W}_O)=Z.$$
We prove it by contradiction. We now assume that there exists a set of matrix $$(\hat{W}_Q,\hat{W}_K, \hat{W}_V, \hat{W}_O)\neq (\tilde{W}_Q,\tilde{W}_K, \tilde{W}_V, \tilde{W}_O).$$
satisfying rank$(\hat{W}_*-W_*)=s \ll d$ such that 
$$\varphi(X;\hat{W}_Q,\hat{W}_K,\hat{W}_V,\hat{W}_O))=Z.$$
This indicates that 
\begin{align*}
    &\text{softmax}(\frac{\hat{Q}\hat{K}^\top}{\sqrt{d_K}})h(X;\gamma)\hat{W}_V\hat{W}_O 
    =\text{softmax}(\frac{\tilde{Q}\tilde{K}^\top}{\sqrt{d_K}})h(X;\gamma)\tilde{W}_V\tilde{W}_O.
\end{align*}
For simplicity of notation, we define 
$$\hat{A}(X)=\text{softmax}(\frac{\hat{Q}\hat{K}^\top}{\sqrt{d_K}})h(X;\gamma), \tilde{A}(X)=\text{softmax}(\frac{\tilde{Q}\tilde{K}^\top}{\sqrt{d_K}})h(X;\gamma).$$
We note here that $\tilde{A}(X)$ and $\hat{A}(X)$ are both $n\times n$ matrices, where $n$ denotes the number of input tokens. This further indicates that
\begin{align*}
    &\tilde{A}(X)\tilde{W}_V\tilde{W}_O  -\hat{A}(X)\hat{W}_V\hat{W}_O =\mathbf{0}_{n\times d}.
\end{align*}
Consider the case which the {\color{red}input vector $x \in \mathbb{R}^{1 \times d}$} corresponds to a single token, and assume that $$\text{rank}\left(x_1,\cdots,x_d\right) = d.$$
    This implies that
    $$\text{rank}(h(x_1;\gamma),\cdots,h(x_d;\gamma) = \text{rank}\left(x_1,\cdots,x_d\right) = d,$$
    $$\text{softmax}\frac{QK^\top}{\sqrt{d_K}} = 1.$$
    Then we have
    $$\hat{A}(x) = \tilde{A}(x) = h(x;\gamma)$$
    and
    $$\left(h(x_1;\gamma),\cdots,h(x_d;\gamma)\right)(\tilde{W}_V\tilde{W}_O - \hat{W}_V\hat{W}_O) = \mathbf{0}_{d}.$$
    This shows that  $$\tilde{W}_V\tilde{W}_O = \hat{W}_V\hat{W}_O$$
\end{proof}

\begin{algorithm*}[b]
\caption{Origin Tracer}
\label{Algorithm}
\begin{algorithmic}[1]
    \State Initialize $n$ as half of the hidden state size $h$
    \State Load Tokenizer and filter token set $T$ using NLTK, such that $T$ contains $h$ words that form one-dimensional tensors after tokenization and embedding.
    \For{$i = 1$ to $h$}
        \State Select a token from $T$ and input it into the base model.
        \State Extract input and output from each layer.
        \State Send input to the corresponding layer of candidate model $M_c$ to obtain the output.
        \State Reconstruct intermediate states using $M_c$ output and the MLP module of base model $M_b$.
    \EndFor
    \State Initialize number of cycles $t$
    \For{$i = 1$ to $t$}
        \State $List \gets \text{RandChose}(n, T)$ \Comment{Randomly select $n$ indices from $T$}
        \State $Y_i \gets \text{Compose}(Y, Y^*, List)$ \Comment{Compose matrix from the selected indices}
        \State $\gamma_1 \geq \gamma_2 \geq \cdots \geq \gamma_n \gets \text{SingularValues}(Y_i)$
        \State $rank\_List \gets \arg \max (\log \|\gamma_i\| / \|\gamma_{i+1}\|)$
    \EndFor
    \State \Return $\min(rank\_List)$
\end{algorithmic}
\end{algorithm*}

\subsection{Extraction of LoRA Rank Information}
In this section, we examine the extraction of low-rank information from the intermediate states, specifically the value and output projection matrices \( \mathbf{W}_\mathbf{V} \) and \( \mathbf{W}_\mathbf{O} \) in Transformer models. The intermediate state between the self-attention mechanism and the MLP layer is expressed as follows:
$$Y = \varphi(X) = \text{softmax}(\frac{QK^\top}{\sqrt{d_k}}) h(X;\gamma) W_v W_o + X$$
where the key and query matrices are defined as:
$$Q = h(X;\gamma) R_\theta W_q, \quad K = h(X;\gamma) R_\theta W_k,$$

$h(X;\gamma)~,~ W_V ~\text{and}~ W_O$ are the normalization, value, and output module matrices, respectively. $R_\theta$ is a Rotational Position Encoding Matrix that incorporates positional information into the token embeddings. According to Therome \ref{uniquely}, these parameter matrices are uniquely determined by their corresponding inputs-output pairs.

To facilitate the analysis and simplify the interpretation of the intermediate state, we focus on cases where the embedded tokens reduce to a \textbf{one-dimensional tensor}. Specifically, let the input tensor be $x \in \mathbb{R}^{1 \times d}$. Under this condition, the intermediate state simplifies as:
$$y = h(x;\gamma)W_V W_O+x$$

Let \( y \) and \( \tilde{y}^* \) denote the intermediate states of \( M_b \) and \( M_c^* \) for the same input tensor \( x \). This relationship can be expressed as:
\[
y - \tilde{y}^* = h(x;\gamma)\left(W_V W_O - \tilde{W}_V^* \tilde{W}_O^*\right).
\]
We can simplify this to:
$$y - \tilde{y}^* = h(x;\gamma) W_{\text{low}},$$
where \( W_{\text{low}} = W_V W_O - \tilde{W}_V^* \tilde{W}_O^* \) represents the difference reflecting the low-rank component. Assuming the input space \( x \) spans a set of linearly independent vectors that form a full-rank matrix \( X \), we have \( \text{rank}(h(X;\gamma)) = \text{rank}(X) \). Thus,
\[
Y = h(X;\gamma) W_{\text{low}}.
\]
base model and its fine-tuned counterpart. For empirical evaluation, we constructed a dataset using the Natural Language Toolkit (NLTK). Specifically, a curated vocabulary was processed through the model's tokenizer and embedding layers to generate a sufficient number of one-dimensional tensor representations suitable for subsequent analysis.

\subsection{Equivalent Intermediate Reconstruction}
if using obfuscations in models, only by the one-dimensional tensor can not extract the LoRA information, we need to reconstruction intermediate states. In this section, we explore the reconstruction of intermediate states from the \textbf{output} and the \textbf{MLP} module of \textbf{base model} and address how to resolve obfuscations involved in these processes. The relationship between single decoder-layer of $M_c$ and $M_c^*$ can be formalized as:
$$ \text{MLP}_{\text{c}}  \circ \varphi_\text{c} =  \text{MLP}_{\text{c}} ^* \circ \varphi_\text{c}^* ~\text{and}~  \text{MLP}_{\text{c}}  = \Pi_1( \text{MLP}_{\text{c}}^*), \varphi_\text{c} = \Pi_2(\varphi_\text{c}^*),$$
where $\Pi_1$ and $\Pi_2$ are unknown obfuscation operations applied to the MLP and attention parameters, respectively.
Given that $ \text{MLP}_{\text{c}} ^* = \text{MLP}_{\text{b}}$, the equation simplifies to:
$$z_c =  \text{MLP}_{\text{c}}  \circ \varphi_\text{c}(x) = \text{MLP}_{\text{b}} \circ \varphi_\text{c}^*(x),$$
which implies that:
$$\varphi_c^*(x) = \text{MLP}_{\text{c}}^{-1}(z_c).$$
Consider the equation for $z$ as follows:
$$z = [\sigma(h(y;\gamma) W_G)\odot \left(h(y;\gamma)W_{\text{up}}\right)]W_{\text{down}}+y.$$
This equation describes a non-linear transformation involving both element-wise operations and matrix multiplications, rendering the inverse mapping from the output back to the input analytically intractable.
Given the nonlinearity and complexity of this transformation, directly inferring the intermediate state $y$ from the observed output $z$ poses significant challenges. To tackle this, we adopt an iterative optimization strategy using gradient descent to approximate the original intermediate state $y$ that likely produced the observed output. The goal is to minimize the discrepancy between the MLP output and the actual observed output by adjusting $y$. The iterative update formula is expressed as:
$$y_{m+1} = y_m - \alpha \nabla \left\| f(y_m) - z_c \right\|^2$$
where $z_c$ denotes the layer output of $M_c$, $y_m$ denotes the estimated intermediate state at iteration $m$,
$\alpha$ is the learning rate, and $\nabla \left\| f(y_m) - z_c \right\|^2$ represents the gradient of the loss function with respect to $y_m$.  This loss function quantifies the squared error between the MLP output $f(y_m)$ and the target output $z_c$.
By iteratively updating $y_m$, the gradient descent algorithm aims to converge on an intermediate state $y^*$ that, when processed through the MLP, closely replicates the observed output $z_c$. This reconstruction approach facilitates the approximation of hidden intermediate states from the MLP outputs, providing a mechanism to indirectly assess and compare the internal representations across different models, such as the base model $M_b$ and the candidate model $M_c$.

\subsection{Overall process}
Given the inherent uncertainty in determining whether a model exhibits chaotic behavior, we first need to reconstruct the intermediate states of all models under evaluation. However, a challenge arises from the unknown degree of accuracy in these reconstructed intermediate states. Some intermediate states exhibit high fidelity in reconstruction, while others deviate significantly, failing to preserve the original information. To address this, we employ a method of multiple random selections, where the best result—defined as the one with the smallest rank—is selected as the final output. In order to expedite the detection process, we opt to use the hidden size as the total number of detections, rather than reconstructing the intermediate states for all one-dimensional tensors. Additionally, we leverage half of the hidden size to extract LoRA information. This choice is grounded in the observation that the rank of the LoRA matrix generally does not exceed half of the hidden size. An overview of the entire algorithm is presented in Algorithm \ref{Algorithm}.

\textbf{Rationale}
The output function of a single decoder layer is injective with probability 1. However, certain outputs that are nearly identical may correspond to intermediate states that are not sufficiently similar, posing a challenge in identifying intermediates that are adequately close to the true intermediate state. To address this, we implement a random sampling algorithm based on the hypothesis that if outputs are nearly identical, their corresponding intermediate states are likely to be similar. Assuming a sufficient number of iterations, this method is expected to reliably approximate the true rank. The probability $P$ of obtaining the true rank can be expressed as follows:

$$
P = \lim_{n \to \infty} 1 - (1 - p_s)^n = 1,
$$

where $n$ represents the number of cycles, and $p_s$ denotes the probability that all selected intermediates are sufficiently close to the true intermediate.

\section{Experiments}

\subsection{Experimental Setup}

\textbf{Models.}
We evaluate our approach on thirty-one open-source models fine-tuned with LoRA, encompassing a diverse set of architectures. Specifically, we include the \textbf{LLaMA2} series (7B, 13B, 70B), \textbf{LLaMA3} (8B, 70B), \textbf{LLaMA3.1} (8B, 70B), and \textbf{Mistral} (7B) as the base models.

\textbf{Datasets.}
For each tokenizer, we construct a dataset derived from the Natural Language Toolkit (NLTK), ensuring that each input is encoded as a single token to maintain a consistent attention score during the self-attention module. 
\subsection{Effectiveness}
We performed  LoRA rank extraction across all layers of each model to assess the effectiveness of our method. For each layer, the smallest extracted rank was systematically selected as the most representative, ensuring that the results reflect the minimal dimensionality required to capture the essential transformations within the model. A comprehensive summary of these results is provided in Table~\ref{tab:MainResult}. To further illustrate the key aspects of our analysis, Figure~\ref{peak} visualizes the peaks in the ratios of consecutive singular values.
\begin{table}[h]
    \caption{Origin-Tracer rank estimates across different types model. ± indicates variation among the top 10$\%$ layers with highest singular value ratios. "O" denotes whether the o projection was fine-tuned; if so, the expected rank is approximately twice the LoRA rank.} 
    \centering
    \vspace{0.5cm}
    \resizebox{0.49\textwidth}{!}{
    \begin{tabular}{@{}cccccc@{}}
    \toprule
    \textbf{Base} & \textbf{Size} & \textbf{Target} & \textbf{O} & \textbf{G-T} & \textbf{Origin-Tracer} \\
    \midrule
    \multirow{8}{*}{\textbf{Llama-3.1}} 
    & \multirow{5}{*}{\textbf{8B}} 
    & \href{https://huggingface.co/souththzz/llama3.1-lora}{[1]}  & × & 8 & 8{\small$ \pm 0$} \\
    && \href{https://huggingface.co/fdelduchetto/llama-3.1-8b-Instruct-math}{[2]} & × & 16 & 19{\small$ \pm 1$} \\
    && \href{https://huggingface.co/anthonysicilia/Llama-3.1-8B-FortUneDial-ImplicitForecaster}{[3]} & × & 32 & 35{\small$ \pm 0$} \\
    && \href{https://huggingface.co/faridlazuarda/valadapt-llama-3.1-8B-it-arabic}{[4]} & × & 64 & 67{\small$ \pm 1$} \\
    && \href{https://huggingface.co/safesign/dror44/llama3.18B-APL_r_128_Instruct}{[5]} & × & 128 & 130{\small$ \pm 1$} \\
    \cmidrule(r){2-6}
    & \multirow{3}{*}{\textbf{70B}} 
    & \href{https://huggingface.co/RikiyaT/Meta-Llama-3.1-70B-tac08}{[1]} & × & 16 & 16{\small$ \pm 0$}\\
    && \href{https://huggingface.co/KevinZW/llama3.1_70b_scriptV3}{[2]} & × & 16 & 16{\small$ \pm 0$}\\
    && \href{https://huggingface.co/KevinZW/llama3.1_70b_image_desV2.2}{[3]} & × & 16 & 16{\small$ \pm 0$}\\
    \midrule
    \midrule
    \multirow{8}{*}{\textbf{Llama-3}} 
    & \multirow{5}{*}{\textbf{8B}} 
    & \href{https://huggingface.co/SwastikM/Meta-Llama3-8B-Chat-Instruct-LoRA}{[1]}  & × & 8 & 8{\small$ \pm 0$} \\
    && \href{https://huggingface.co/islam23/llama3-8b-RAG_News_Finance}{[2]} & \checkmark & 16 & 32{\small$ \pm 1$} \\
    && \href{https://huggingface.co/namespace-Pt/Llama-3-8B-Instruct-80K-QLoRA}{[3]} & \checkmark & 32 & 67{\small$ \pm 0$} \\
    && \href{https://huggingface.co/Nutanix/Meta-Llama-3-8B-Instruct_KTO_lora_SupportGPT-alignment-1}{[4]} & × & 64 & 67{\small$ \pm 1$} \\
    && \href{https://huggingface.co/safesign/llama3-8b-instruct-final-less-lora-everything}{[5]} & × & 128 & 133{\small$ \pm 1$} \\
    \cmidrule(r){2-6}
    & \multirow{3}{*}{\textbf{70B}} 
    & \href{https://huggingface.co/ScaleGenAI/Llama3-lora}{[1]} & × & 8 & 8{\small$ \pm 0$}\\
    && \href{https://huggingface.co/mattshumer/Reflection-Llama-3.1-70B}{[2]} & × & 512 & 517{\small$ \pm 0$}\\
    \midrule
    \midrule
    \multirow{11}{*}{\textbf{Llama-2}} 
    & \multirow{5}{*}{\textbf{7B}} 
    & \href{https://huggingface.co/FinGPT/fingpt-mt_llama2-7b_lora}{[1]}  & × & 8 & 8{\small$ \pm 0$} \\
    && \href{https://huggingface.co/junhaos-nv/llama2-7b-ogbn-products-lora}{[2]} & × & 16 & 18{\small$ \pm 1$} \\
    && \href{https://huggingface.co/renyiyu/llama-2-7b-sft-lora}{[3]} & × & 32 & 34{\small$ \pm 0$} \\
    && \href{https://huggingface.co/dtthanh/llama-2-7b-und-lora-2.7}{[4]} & × & 64 & 66{\small$ \pm 1$} \\
    && \href{https://huggingface.co/RuterNorway/Llama-2-7b-chat-norwegian}{[5]} & × & 128 & 128{\small$ \pm 1$} \\
    \cmidrule(r){2-6}
    & \multirow{5}{*}{\textbf{13B}} 
    & \href{https://huggingface.co/FinGPT/fingpt-sentiment_llama2-13b_lora}{[1]}  & × & 8 & 8{\small$ \pm 0$} \\
    && \href{https://huggingface.co/Lajonbot/Llama-2-13b-hf-instruct-pl-lora_adapter_model}{[2]} & × & 16 & 16{\small$ \pm 1$} \\
    && \href{https://huggingface.co/RuterNorway/Llama-2-13b-chat-norwegian-LoRa}{[3]} & × & 32 & 32{\small$ \pm 0$} \\
    && \href{https://huggingface.co/Blackroot/Llama-2-13B-Storywriter-LORA}{[4]} & × & 64 & 66{\small$ \pm 1$} \\
    && \href{https://huggingface.co/zayjean/llama-2-13b_verify-bo-lora-r256-a512-d0_3K-E20}{[5]} & × & 128 & 128{\small$ \pm 1$} \\
    \cmidrule(r){2-6}
    & \multirow{1}{*}{\textbf{70B}} 
    & \href{https://huggingface.co/Yukang/Llama-2-70b-longlora-32k}{[1]} & \checkmark & 8 & 16{\small$ \pm 0$}\\
    \midrule
    \midrule
    \multirow{5}{*}{\textbf{Mistral}} 
    & \multirow{5}{*}{\textbf{7B}} 
    & \href{https://huggingface.co/CleverShovel/Mistral-7B-v0.1-paper-reviews-lora}{[1]} & × & 8 & 8{\small$ \pm 0$} \\
    && \href{https://huggingface.co/BlazeLlama/GeoGecko-Mistral2-7B-QLORA}{[2]} & × & 16 & 16{\small$ \pm 0$} \\
    && \href{https://huggingface.co/paragdakle/mistral-stem-lw-lora}{[3]} & \checkmark & 32 & 64{\small$ \pm 0$} \\
    && \href{https://huggingface.co/farmnetz/chef-z-mistral-7b-instruct-peft}{[4]} & \checkmark & 64 & 128{\small$ \pm 0$} \\
    && \href{https://huggingface.co/paragdakle/mistral-7b-cnndaily-lora}{[5]} & \checkmark & 128 & 256{\small$ \pm 0$} \\
   \bottomrule
    \end{tabular}
    }
    \label{tab:MainResult}
\end{table}

\subsection{Discussion}

\textbf{Impacts of Inspected Layers.}
Our findings reveal substantial variation in the effectiveness of the intermediate state reconstruction algorithm across different layers of the model. In particular, the reconstruction performance in the middle layers significantly outperforms that of the initial and final layers, as illustrated in Figure~\ref{fig:layer rank}. This suggests that layer-specific characteristics play a crucial role in the success of model reconstruction methods.

\textbf{Sensitivity.}
As demonstrated in the previous section, Origin-Tracer exhibits notably higher accuracy in estimating the LoRA (Low-Rank Adaptation) ranks within intermediate layers, compared to the front and rear layers. Experimental results show that rank estimations from these layers are more consistent with the model’s true low-rank structure, underscoring their pivotal role. To further investigate this phenomenon, we quantify the 2-norm of layer outputs, as depicted in Figure~\ref{fig:layeroutputs}. The results indicate that intermediate layers produce significantly higher 2-norm values, which may reflect their enhanced capacity for capturing and transmitting complex information. This intrinsic property could explain their superior performance in low-rank approximation tasks.

\begin{figure}[h]
  \centering
  \includegraphics[width=\linewidth]{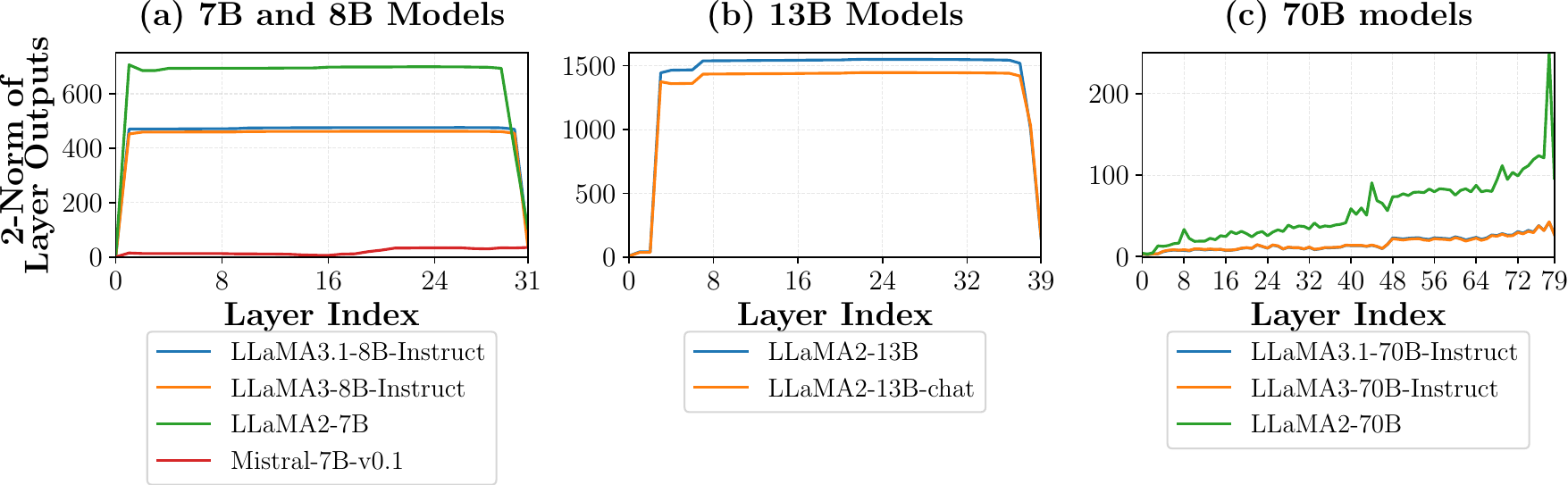}

  \caption{ Norm of Layer Outputs Across Model Architectures. This figure presents the L2 norms of outputs across layers in models of varying sizes. }
  \label{fig:layeroutputs}
\end{figure}

\begin{figure*}[h]
  \centering
  \includegraphics[width=\linewidth]{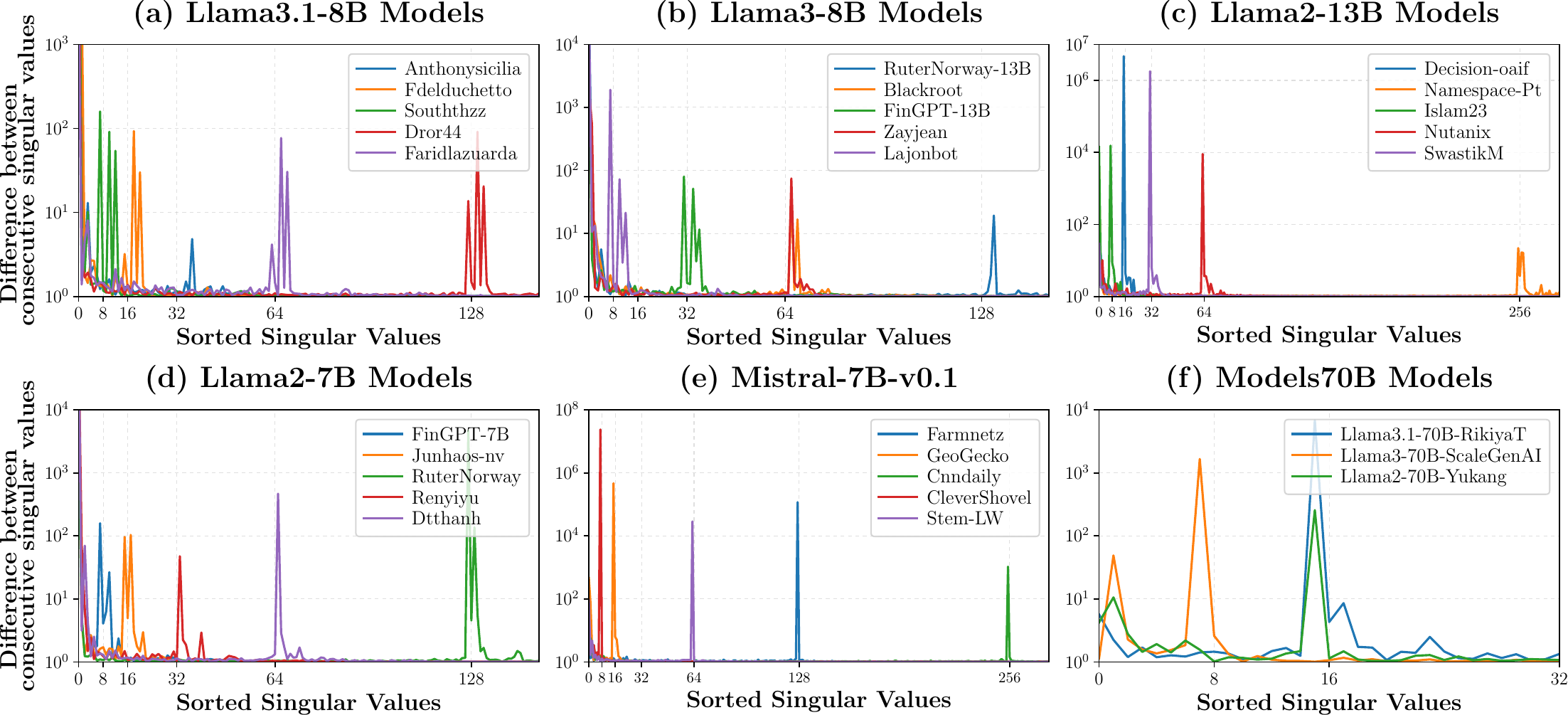}
  \caption{Origin-Tracer determines the LoRA rank by pinpointing a sharp decline in singular values, which manifests as a peak in the disparity between consecutive singular values.  In the model, this peak occurs at a position adjacent to the rank. Subfigures (a)–(f) cover LLaMA3.1-8B, LLaMA3-8B, LLaMA2-13B, LLaMA2-7B, Mistral-7B-v0.1, and 70B-scale models.}
  \label{peak}
\end{figure*}

\begin{figure*}[h]
\vspace{0.5cm}
  \centering
  \includegraphics[width=\linewidth]{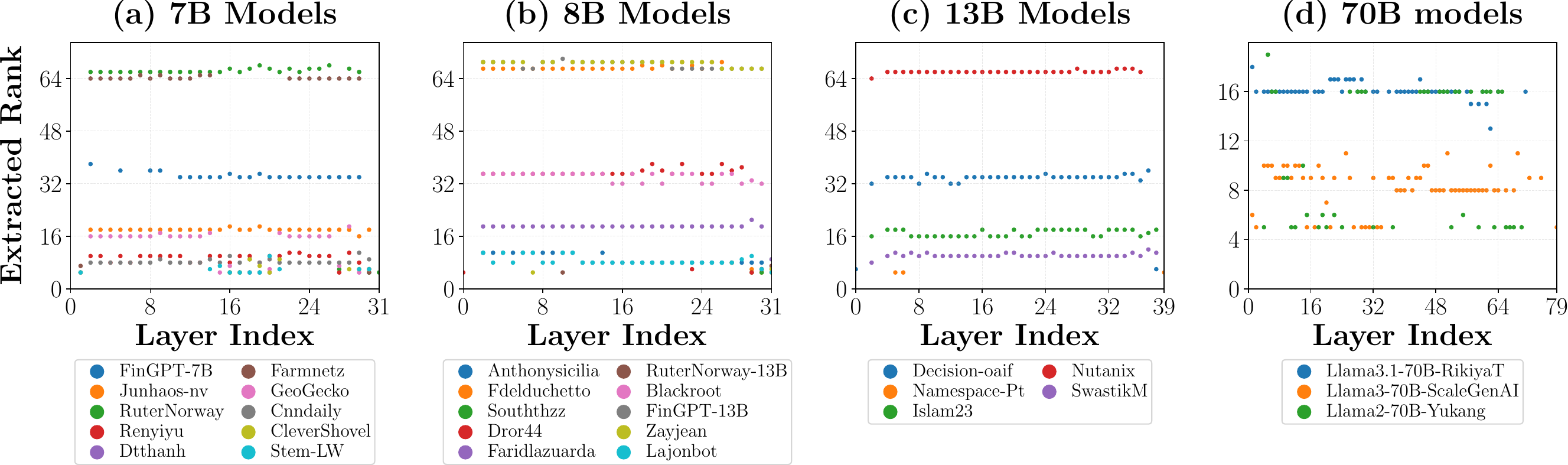}
  \caption{Layer-wise extracted ranks across different model scales.
This figure presents the extracted LoRA ranks for each transformer layer across various models. Subfigures (a)–(d) correspond to 7B, 8B, 13B, and 70B model families, respectively. Middle layers consistently exhibit higher extracted ranks, indicating more expressive transformations and suggesting their greater importance in model reconstruction.}
  \label{fig:layer rank}
  \vspace{0.5cm}
\end{figure*}

\textbf{Inspect Strategy}
We employ the origin-tracer across all layers of the model and select the top 10$\%$ of layers characterized by the highest ratios of consecutive singular values to determine their rank.  This selection criterion is based on the principle that larger ratios of consecutive singular values indicate a more distinct signal, thereby aligning more closely with the true state of the layers.

\subsection{Limitations and Future Work}
While the Origin-Tracer is effective in detecting fine-tuning origins across diverse base models, our approach has limitations that suggest several avenues for future work. 

\textbf{1. Applicability to Multi-Layer Perceptron (MLP) Changes.}  
Our method is currently not applicable to models with changes in the multi-layer perceptron (MLP) architecture. This limitation may restrict its adaptability to a wider range of neural network designs, potentially hindering its utility in various applications. Future work should explore strategies to extend the applicability of the Origin-Tracer to include models where MLP modifications are necessary.

\textbf{2. Constraints of Low-Rank Modifications.}  
Furthermore, our approach is limited to scenarios involving low-rank modifications of parameter matrices. This restriction could impact the generalization of the method to more complex model adjustments that may not conform to low-rank criteria. Subsequent research could investigate alternative strategies that accommodate a broader spectrum of parameter modifications, enhancing the flexibility of the Origin-Tracer.

\textbf{3. Parameter Modification Requirements in Attention Modules.}  
Lastly, our methodology mandates parameter modifications specifically within the 'V' (values) and 'O' (outputs) components of the attention mechanism. This requirement may limit the applicability of our approach in models where changes to other components are necessary for optimal performance. Future studies could focus on integrating modifications beyond these specified modules, thereby increasing the robustness and versatility of the Origin-Tracer.





\bibliography{ecai-template/ecai_conference}

\end{document}